\useunder{\uline}{\ul}{}
\def\BibTeX{{\rm B\kern-.05em{\sc i\kern-.025em b}\kern-.08em
    T\kern-.1667em\lower.7ex\hbox{E}\kern-.125emX}}
\newtheorem{theorem}{Theorem}
\newcommand{\vecX}{\mathbf{x}}
\newcommand{\vecY}{\mathbf{y}}
\newcommand{\vecT}{\boldsymbol{\theta}}
\newcommand{\matD}{\mathbf{D}}
\newcommand{\matG}{\mathbf{G}}
\newcommand{\matX}{\mathbf{X}}
\newcommand{\matZ}{\mathbf{Z}}
\newcommand{\spaceG}{\mathcal{G}_{\mathrm{rsto}}}
\theoremstyle{definition}
\begin{document}

\title{A theoretical framework for self-supervised contrastive learning for continuous dependent data\\
% \thanks{Identify applicable funding agency here. If none, delete this. Do we have a funding agency?}
}

% \author{\IEEEauthorblockN{Anonymous Authors}}

\author{\IEEEauthorblockN{1\textsuperscript{st} Alexander Marusov}
\IEEEauthorblockA{\textit{Applied AI Center} \\
\textit{Skoltech}\\
Moscow, Russia\\ 
A.Marusov@skoltech.ru}
\and
\IEEEauthorblockN{2\textsuperscript{nd} Aleksandr Yugay}
\IEEEauthorblockA{\textit{Applied AI Center} \\
\textit{Skoltech}\\
\textit{MIPT}\\
Moscow, Russia\\
iugai.aa@phystech.edu}
\and
\IEEEauthorblockN{3\textsuperscript{rd} Alexey Zaytsev}
\IEEEauthorblockA{\textit{Applied AI Center} \\
\textit{Skoltech}\\
\textit{Risk Management} \\
\textit{Sber}\\
Moscow, Russia\\
A.Zaytsev@skoltech.ru}
}
\maketitle

\begin{abstract}
Self-supervised learning (SSL) has emerged as a powerful approach to learning representations, particularly in the field of computer vision. 
However, its application to dependent data, such as temporal and spatio-temporal domains, remains underexplored. 
Besides, traditional contrastive SSL methods often assume \emph{semantic independence between samples}, which does not hold for dependent data exhibiting complex correlations. 
We propose a novel theoretical framework for contrastive SSL tailored to \emph{continuous dependent data}, which allows the nearest samples to be semantically close to each other. In particular, we propose two possible \textit{ground truth similarity measures} between objects --- \emph{hard} and \emph{soft} closeness. Under it, we derive an analytical form for the \textit{estimated similarity matrix} that accommodates both types of closeness between samples, thereby introducing dependency-aware loss functions.
We validate our approach, \emph{Dependent TS2Vec}, on temporal and spatio-temporal downstream problems. 
Given the dependency patterns presented in the data, our approach surpasses modern ones for dependent data, highlighting the effectiveness of our theoretically grounded loss functions for SSL in capturing spatio-temporal dependencies.
Specifically, we outperform TS2Vec on the standard UEA and UCR benchmarks, with accuracy improvements of $4.17$\% and $2.08$\%, respectively. Furthermore, on the drought classification task, which involves complex spatio-temporal patterns, our method achieves a $7$\% higher ROC-AUC score.

\end{abstract}

\begin{IEEEkeywords}
self-supervised learning, theory, dependent data, time-series
\end{IEEEkeywords}

\section{Introduction}
Self-supervised learning (SSL) methods demonstrate significant potential in achieving the quality of supervised models using a vast amount of unlabeled data \cite{nguyen2024improving, shang2024transitivity, Marusov23, bazarova2024normalizing}. 
The SSL paradigm provides a method for training an encoder using a vast amount of \emph{unlabeled} data, which produces a \emph{universal} representation (embedding) of an input signal. 
Since representations are universal, we can use them to solve various downstream problem tasks without time- and data-consuming full retraining of the encoder~\cite{EMIT, Romanenkova22}. 

The core issue with SSL approaches is the existence of the \textit{complete collapse} problem, i.e., the situation where all objects have the same representation~\cite{Jing22}. The contrastive SSL algorithms propose a natural solution to such a problem --- they pull representations of semantically similar objects (\textit{positive pairs}) closer in the embedding space while pushing apart representations of dissimilar ones (\textit{negative pairs})~\cite{Joshi23}.

SSL algorithms for computer vision domain, including the most prominent ones like SimCLR~\cite{Chen20}, MoCo~\cite{Chen21}, BYOL~\cite{Gril20}, Barlow Twins~\cite{Zbontar21}, VICReg~\cite{bardes2022vicreg}, DINO~\cite{Caron21}, DinoV2~\cite{oquab2023dinov2}, utilize \textit{semantic independence} assumption, which states that all objects belong to different classes. Thus, to construct a positive pair, researchers typically apply data augmentation techniques (e.g., random cropping, color jittering, or noise injection) to a single instance, thereby generating semantically similar views of the same underlying object.
All other pairs of objects are treated as negatives.

In the dependent data domain (e.g., time series), we should consider possible correlations between samples, which lead to more accurate definitions of positive and negative pairs. Thus, the hypothesis of \textit{semantic independence for distinct objects} is invalid for such a case. However, in the general case, we can't believe that closely lying samples form a positive pair due to probable anomalies in the data~\cite{Yue22}. 
Thus, most of the existing works (e.g., TS2Vec~\cite{Yue22}, CoST~\cite{woo2022cost}, TimesURL~\cite{liu2024timesurl}), on the one hand, try to account for the features that are specific for time series but on the other --- continue to use standard loss functions from computer vision domain, designed with a semantic independence assumption in mind. Instead of a straightforward usage of the semantic independence hypothesis, modern approaches (e.g., Soft-contrastive learning~\cite{lee2024soft}) try to define closeness between objects in a softer way. However, these methods rely more on heuristics rather than creating theoretically based similarities between samples. 
% For example, TS2Vec considers different objects from nearby timestamps as a negative pair.
% Soft-contrastive learning, SoftCLT~\cite{lee2024soft}, rather than considering samples as semantically independent, defines a smoother closeness between objects, resulting in improved performance in empirical studies. 
% However, the proposed dependence structure between samples covers only a limited class, even with a family of stationary time series~\cite{jenkins1976time}. 

We aim to obtain theoretically based loss functions that account for a diverse range of possible correlations in dependent data. However, as we have already mentioned, temporal consistency is typically absent in time series. Thus, we propose a single central assumption on \textit{data continuity}, i.e., closely lying objects can be viewed as a positive pair. Under it, we propose a theoretical framework for the SSL for continuous dependent data. In particular, we propose several ways of considering different possible similarities between samples (\textit{hard and soft dependency}), which are contained in corresponding \textit{similarity matrices}. The intuition of our approach is depicted in Fig.~\ref{fig:framework}. As a result, we obtained several loss functions that are suitable for different types of similarity between elements. We verified the effectiveness of our framework by considering different temporal and spatio-temporal downstream problems. %In particular, we considered climate-related data that satisfies our data continuity assumption because the samples are known to be highly correlated and anomaly-free~\cite {Wang2022}.

\begin{figure}[t]
    \centering
    \includegraphics[scale=0.3]{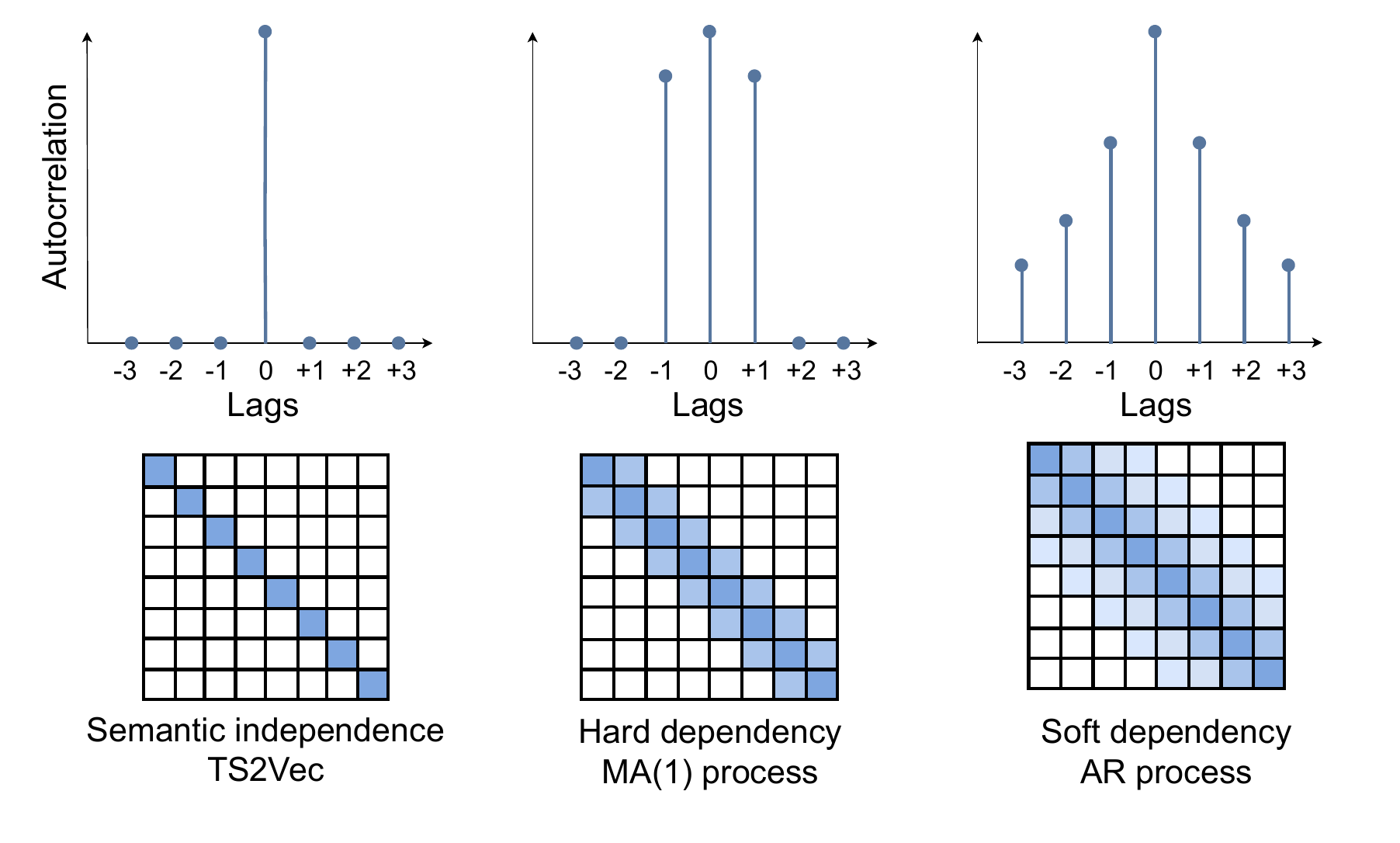}
    \caption{The autocorrelation function (ACF, top row) visualizes the dependency between objects in time series processing, and the correlation matrices (bottom row) are also a standard way to plot similarities between elements. For the ACF, higher values correspond to bigger correlations; for the matrices, the darker colors correspond to higher correlations.
    We provide plots for three different possible semantic connections between samples: semantic independence and two of our hard and soft dependencies. In the first case, all samples are not semantically related to each other. For \emph{hard dependency}, similarly to an MA(1) process, the adjacent samples are semantically connected, while others are not. For \emph{soft dependency}, similarly to an AR process, the closeness between elements decreases exponentially with the distance between them.}
    \label{fig:framework}
\end{figure}

Our implementation is publicly available at our GitHub\footnote{\url https://github.com/aayugay99/DepTS2Vec}.
%\href{https://github.com/aayugay99/DepTS2Vec}{https://github.com/aayugay99/DepTS2Vec}.
The key novelties and contributions of this work are as follows:
\begin{itemize}    
    \item \textbf{SSL theoretical framework for continuous dependent data.} Current theoretical fundamental studies~\cite{Balestriero22, Jing22, Tian21, ji2023power} for the discriminative SSL assume semantic independence between samples. Based on the data continuity assumption, we propose a theoretical SSL framework for dependent data that considers different closeness types between objects — \textit{hard and soft dependency}. 
    The information about the closeness between samples for each dependency type is collected in corresponding \emph{ground truth similarity matrix}. In this way, our hard and soft dependency relations correspondingly mimic moving average (MA) and autoregression (AR) components that together provide a complete decomposition of a stationary time series, ARIMA~\cite{jenkins1976time}, as the Wold theorem states~\cite{wold1938study}. 
    
    % To account for different pairwise similarities, we considered corresponding dependency types --- \emph{nearby} (only closest elements are positive) and \emph{exponential} (positive non-binary soft-relationship, i.e., the impact of each element to the target one is inversely proportional to the distance between them). In this way, our framework mimics moving average and autoregression components that together provide a complete decomposition of a stationary time series, ARIMA~\cite{jenkins1976time}, as the Wold theorem states~\cite{wold1938study}. According to our results, each relation type leads to a specific loss function that should converge to a reasonable non-degenerative solution.

    \item \textbf{Optimal closed-form estimated similarity matrix for continuous dependent data.}
    To estimate the similarity matrix, we adapted an optimization problem originally developed for semantically independent data to one suitable for continuous, dependent data. By solving this problem we obtained closed-form \emph{estimated similarity matrix} for continuous dependent data. We prove that this matrix is a local minimum of the corresponding optimization problem.

    \item \textbf{Practical application.} 
    For each of dependency types we constructed own loss functions using proposed \emph{ground truth and estimated similarity matrices.}To validate our theoretical framework in practice, we introduce \emph{Dependent TS2Vec}, a method that incorporates our dependency-aware loss function into the standard TS2Vec architecture.
    % To validate our findings, we consider different downstream problems. Considering climate-related data, which is mostly suitable for the data continuity assumption we can see the superiority of our method over the other approaches.
    On UCR and UEA time series classification benchmarks, our method outperforms TS2Vec by 2.08\% and 4.17\% in accuracy respectively. For drought classification, where spatio-temporal continuity is a key characteristic, we achieve a 7\% improvement in ROC-AUC score.
    
    % the drought prediction task for regions from different climate zones. As a baseline, the TS2Vec model was generalized from time series case to spatio-temporal. 
    % Our spatio-temporal TS2Vec with the proposed loss function performs better than the original one.
\end{itemize}

\section{Related works}
\label{sec:rw}

% TODO: we need an intro here about how the structure of this section emerged
\subsection{Time series SSL}
TS2Vec~\cite{Yue22} positions itself as a universal SSL-approach. To create positive pairs, researchers take intersecting time intervals and make representations of the same timestamp closer, considering all other samples as negative. CoST framework~\cite{woo2022cost} splits the original time series into trend and seasonal components and considers the transition to the frequency domain. TimesURL~\cite{liu2024timesurl} expands the idea of CoST and TS2Vec by considering augmentations from the frequency domain, using the objective function from TS2Vec, and adding an extra loss component that estimates the error of reconstructing the initial signal from the masked one. A modern soft-contrastive learning approach, SoftCLT~\cite{lee2024soft}, rather than considering samples as semantically independent, defines a smoother closeness between objects, resulting in improved performance in empirical studies. 
However, the proposed dependence structure between samples covers only a limited class, even with a family of stationary time series~\cite{jenkins1976time}. Given the existing body of work, we observe that current methods either treat time series samples as semantically independent or propose possible heuristic similarity relations between elements without theoretical justification. 

\subsection{Spatio-temporal SSL}
\label{related work: spatio-temporal ssl}
% SSL methods for \textbf{spatio-temporal} domain show promising results. Masked autoencoder~\cite{Feichtenhofer22} applied spatio-temporal masking of different patches in a video and tried reconstructing an initial signal. 
% To extract spatio-temporal correlations from graph-based data, authors~\cite{Wang23} used BYOL architecture. For each node, they apply random masking for the signal to create a \textit{temporal} positive pair and assume the neighbor vertex as a \textit{spatial} positive pair.

Spatio-temporal data can be divided into three major subdomains: video representation, traffic flow prediction, and climate forecasting~\cite{tan2023openstl}. Below, we consider the application of SSL for each scope in more detail.

\paragraph{Video representation} Generative approaches are widely used to obtain video representations. Authors~\cite{Feichtenhofer22} used a masked autoencoder to receive informative representations by reconstructing an initial signal from the masked one. Researchers~\cite{donahue2024videossl} learned different video-specific features, such as redundant actions and background frames, using the proposed regression loss function. To enhance the quality of the action prediction task, the authors proposed a temporal DINO architecture that captures both temporal and spatial dependencies~\cite{teeti2023tempdino}.

\paragraph{Traffic flow prediction} The authors in~\cite{Wang23} represent the data using a graph-based approach, where each node corresponds to the traffic flow at a specific station. To extract spatio-temporal correlations, researchers utilized BYOL architecture and specifically defined positive pairs. In particular, \textit{spatial positive pair} assumes neighboring vertexes, while to create \textit{temporal positive pair}, the authors randomly mask the initial signal. Instead of a standard random masking, researchers~\cite{gao2024traffic} proposed a traffic-specific augmentation approach based on the similarity between spatial regions. Another recent approach~\cite{li2025STMFormer} utilized a transformer, which used spatial and temporal attention modules to extract corresponding dependencies from the data.

\paragraph{Climate forecasting} Climate prediction holds a distinctive position among all spatio-temporal challenges. The standard contrastive architecture, SimCLR, was used for weather station classification in East China~\cite{Wang2022}. The authors~\cite{hoffmann2023atmodist} proposed to use siamese network~\cite{chicco2021siamese} to obtain informative representations of atmospheric fields. To extract universal embeddings of the multispectral and synthetic aperture radar images, researchers~\cite{nawaz2025restricted} proposed a self-supervised learning framework that doesn't require any negative labels. 

In our research, we focused on climate-related tasks for several reasons:
\begin{itemize}
    \item Weather data tends to be relatively stable over time~\cite{Wang2022}. The stability of climate data is essential, as our theoretical SSL framework requires the assumption of data continuity, i.e., when at least nearby samples form a positive pair.

    \item Climate-related tasks play a pivotal role in human life. For example, drought prediction modeling is vital in the context of global climate change~\cite {Xiujia22}.
\end{itemize}

\subsection{SSL theoretical background}

While in practice, researchers have found different solutions to prevent a complete collapse problem, they still need a theoretical justification for the proposed methods. While contrastive-based approaches naturally prevent \textit{complete collapse}, i.e., when embeddings for all samples are identical, it remains unclear how non-contrastive methods address this challenge. Under a linear setting, i.e., when the encoder is a linear model, the authors~\cite{Tian21} have studied in detail one of the non-contrastive methods --- Bootstrap Your Own Latent, BYOL~\cite{Gril20}. Another fundamental study was dedicated to the \textit{dimensional collapse} problem --- when representations of different objects lie in a subspace of a low dimension. Researchers~\cite{Jing22} found an intriguing property of SimCLR: despite preventing a complete collapse, it still suffers from dimensional collapse. To understand the nature of contrastive learning, the authors~\cite{Balestriero22} estimated a similarity matrix for semantically independent samples (see Section~\ref{method:hard_contrastive_learning} for much more details).

\subsection{Research gap}
\label{related work: research_gap}
According to our review, we see the following gaps:
\begin{itemize}
    %\item There are only a few works on spatio-temporal SSL
    \item SSL approaches for temporal and spatio-temporal data heavily rely on the loss function, which is designed for the case with semantically independent samples. However, there exists a class of applied problems (e.g., a wide range of climate modeling tasks) where the samples are semantically related to each other. Thus, such tasks need a specific objective function that accounts for data peculiarities.
    
    \item Despite the existence of the SSL theory for semantically independent data, it is still absent for dependent data. 
\end{itemize}

\section{Method}
\label{sec:method}
\subsection{Unsupervised contrastive learning problem statement}

\paragraph{Notation}
Below, we introduce the notation that will be used for the problem definition, mostly following~\cite{Balestriero22}:
\begin{itemize}
    %\item $\mathbbm{R}_{\ne 0} =  \mathbbm{R} \setminus \{0\}$ - set of real numbers excluding zero.
    \item $\vecX \in \mathbb{R}^k$ --- an input, a sample of inputs $\matX = \{\vecX_i\}_{i = 1}^N$. 
    \item $f_{\vecT}(\vecX) \in \mathbb{R}^{K}$ --- embedding of an input $\vecX$ produced by an encoder $f_{\vecT}$ with parameters $\vecT \in \mathbb{R}^p$.
    \item $\mathbf{Z} = \{f_{\vecT}(\vecX_i) \}_{i = 1}^N \in \mathbb{R}^{N \times K}$ --- a matrix of embeddings $f_{\vecT}(\vecX)$.
    \item $d_{ij} = d(f_{\vecT}(\vecX_i), f_{\vecT}(\vecX_j))$ --- a distance (e.g. cosine) between embeddings $f_{\vecT}(\vecX_i)$ and~$f_{\vecT}(\vecX_j)$.
    \item $\matD = \{d_{ij}\}_{i, j = 1}^N \in \mathbb{R}^{N \times N}$ --- a matrix of pairwise distances~$d_{ij}$.
    \item $\matG = \{g_{ij}\}_{i, j = 1}^N \in \mathbb{R}^{N \times N}$ --- a ground truth similarity matrix between samples.

    \item $\widehat{\matG} = \{\hat{g}_{ij}\}_{i, j = 1}^N \in \mathbb{R}^{N \times N}$ --- the  estimated similarity matrix $\widehat{\matG} \triangleq \widehat{\matG}(\matZ)$, based on $\mathbf{Z}$, which we want to make close to $\matG$. As shown below, it differs from $\matD$.
\end{itemize}

Additional notation introduces spaces of matrices and regularizations:
\begin{itemize}
    \item $\mathcal{G}$ --- a set of symmetrical matrices with non-negative values and zero diagonal.
    \item $\mathcal{G}_{\mathrm{rsto}} = \{ \matG \in \mathcal{G}: \matG \textbf{1} = \textbf{1} \}$ --- a space of right-stochastic matrices.
    \item $\mathcal{R_{\log}}(\matG) = \tau \sum_{i \neq j} g_{ij} (\ln(g_{ij}) - 1)$, where $\tau$ is a temperature constant, is the \emph{Log regularization}.
    %\item $\mathcal{R_{F}}(\matG) = \tau \sum_{i \neq j}  \mathbf{G}_{ij} (\mathbf{G}_{ij}/2 - 1)$ is \emph{the $\mathcal{F}$-regularization}.
\end{itemize}

\paragraph{Method pipeline}

In representation learning, the objective is to fit an encoder $f_{\boldsymbol{\theta}}(\vecX)$ that generates meaningful representations for each sample $\vecX$~\cite{bengio2013representation}. When labeled data is limited, self-supervised learning (SSL) is utilized to train an encoder, relying solely on unlabeled samples $X$. 

Contrastive self-supervised learning, a type of SSL, relies on the proximity between objects, determined without supervision. Specifically, we identify which samples are \textit{similar} to each other and which are \textit{different}. This information is typically captured in a ground truth similarity matrix $\matG$. The primary goal of the SSL is to develop an encoder that produces representations reflecting these relationships. More formally we need to build such encoder $f_{\vecT}: \forall x, y \in \matX \rightarrow sim(x,y) \approx sim(f_{\vecT}(\vecX), f_{\vecT}(\vecY))$, where $sim(x, y)$ --- is a ground truth similarity measure between objects $x$ and $y$.

To implement the idea above, we need to estimate a similarity matrix $\widehat{\matG}(\matZ)$ based on embeddings $f_{\boldsymbol{\theta}}(\vecX)$ and optimize parameters $\boldsymbol{\theta}$ to produce $\widehat{\mathbf{G}}$ such that it is close to the ground truth matrix $\mathbf{G}$~\cite{Balestriero22}:
\begin{equation}
\label{eq:loss_sim_matrix}
\mathcal{L}(\vecT) = - \sum_{i,j} g_{ij} \ln  \widehat{g}_{ij}(f_{\boldsymbol{\theta}})  \rightarrow \min_{\vecT},
\end{equation}
%where $\widehat{\matG} = \widehat{\matG}(\mathbf{Z}(f_{\vecT}(\matX)))$.

In the image domain, the corresponding matrix $\matG$ is based on the assumption that two distinct images $\vecX_i$ and $\vecX_j$ are semantically independent: $g_{ij} = 0$ for them. Thus, to build a positive pair for any image $\vecX_{i}$ we create its augmented version $\vecX_{i^{'}}$, and for these two objects $g_{ii^{'}} = 1$. 
We refer to this approach as \emph{Semantic independence} since all samples are semantically independent of each other.

For dependent data, such as time series, the assumption of semantic independence generally does not hold due to non-zero correlations between samples.
Therefore, we require a different method that takes into account the similarity between closely related samples.

To create a correct substitute loss function~\eqref{eq:loss_sim_matrix} for dependent data, we need to answer the following questions:
\begin{itemize}
    \item[\textbf{Q1}] How does the ground truth similarity matrix $\matG$ should be defined for dependent data?
    \item[\textbf{Q2}] How can we estimate $\widehat{\matG}$ based on $\matZ$ to effectively capture the dependencies in $\matG$?
\end{itemize}
 
To address the first question, we suggest two approaches for constructing a ground truth similarity matrix $\matG$ for dependent data, which consider the closeness between samples: --- \emph{hard} and \emph{soft} dependency relations. 
For the second question, we adapted the optimization problem originally designed for independent samples to handle dependent data by modifying the constraints on the ground truth similarity matrix $\matG$. 
Solving these optimization problems for different types of sample relationships (\emph{hard} and \emph{soft} dependency) results in the estimated matrix $\widehat{\matG}$. 

Using $\matG$ and $\widehat{\matG}(\matZ)$, we formulate a loss function based on equation \eqref{eq:loss_sim_matrix} and evaluate its performance on different downstream problems. 
In the section below, we start with the formulation of the optimization problem for the \textit{Semantic independence} case.

\subsection{Semantic independence}
\label{method:hard_contrastive_learning}

In the image domain, samples are typically semantically independent. To generate a positive example for an image, various augmentation techniques, such as Random Crop and Gaussian Blur, are applied~\cite{Chen20}. Consequently, each sample is paired with one positive example, while all other samples are considered negatives. Therefore, the authors~\cite{Balestriero22} utilized right-stochastic similarity matrices $\spaceG$, where each row sums to $1$. Additionally, they set the diagonal elements to zero to align the SimCLR similarity matrix with Laplacian estimation.

Formally, to recover the similarity matrix $\hat{\matG}$, the following optimization problem is defined:
\begin{equation}
\label{opt:prob}
\begin{aligned}
\mathrm{Tr}(\matD \matG) &+ \mathcal{R}(\matG) \rightarrow \min_{\matG \in \spaceG}, \\
\textrm{s.t. } %\matG_{ii} &= 0 \qquad \text{for any\,\,} i \in \overline{\{1, N\}}, \\
\sum_{j\neq i}g_{ij} &= 1 \qquad \text{for any\,\,} i \in \overline{\{1, N\}}, \\
\end{aligned}
\end{equation}

where $\mathcal{R}$ is a regularizer.
Using $\mathcal{R_{\log}}$ regularizer, authors get SimCLR similarity matrix $\widehat{\matG}$ with elements:
\begin{equation}
\label{eq:sim_orig}
\begin{aligned}
\widehat{g}_{ij} &=& \frac{e^{- \frac{1}{\tau}d(f_{\vecT}(\vecX_i), f_{\vecT}(\vecX_j))} }{\sum_{j \neq i} e^{-\frac{1}{\tau}d(f_{\vecT}(\vecX_i), f_{\vecT}(\vecX_j))}} \mathbbm{1}_{\{i \neq j \}}.
\end{aligned}
\end{equation}
Now we can plug this matrix in~\eqref{eq:loss_sim_matrix}
and solve the optimization problem w.r.t. the encoder parameters $\vecT$.

\subsection{Our approach}
\label{method:our_approaches}

As we mentioned earlier, to construct an objective function~\eqref{eq:loss_sim_matrix}, we need to define both the \emph{ground truth and estimated similarity matrices}. In the section below, we define different possible ground truth similarity matrices $\matG$. After that, we formulate an optimization problem to recover an estimated similarity matrix $\widehat{\matG}$. Lastly, we put together the $\matG$ and $\widehat{\matG}$ matrices to create a loss function suitable for \emph{continuous dependent data}. 

\subsubsection{Ground truth similarity matrices}
The estimated similarity matrix~\eqref{eq:sim_orig} assumes semantic independence between samples, which is not always valid. For time series and spatio-temporal data, samples typically exhibit non-zero correlations.

To address these dependencies, we seek to develop more appropriate similarity matrices. %Under the stationarity assumption for time series, correlations between observations can be modeled using either a limited number of non-zero correlations or an exponential decay~\cite{box2015time}.

As a result, we consider two types of relationships between elements:
\begin{itemize}
\item \emph{Hard dependency}. Only closely located observations are correlated.
\item \emph{Soft dependency}. The correlation decreases steadily as the distance between elements increases.
\end{itemize}

% Wold theorem~\cite{wold1938study} specifies that stationary time series can be decomposed into moving average (MA) and autoregression (AR) terms, leading to a justification for the famous ARIMA model~\cite{jenkins1976time}. Our \emph{nearby} and \emph{exponential} relations model moving average with a unit lag $\mathrm{MA(1)}$ and autoregression $\mathrm{AR}$ components correspondingly. Indeed, the autocovariance function of $MA(1)$ process truncates to zero after lag $k = 1$ (i.e., nonzero covariation only with the nearest object, see ~\cref{fig:framework})  ~\cite{brockwell2002introduction}. In contrast, the AR process has nonzero components for any lag. The idea of exponential relation also exists as a soft-relationship graph for self-supervised learning in the computer vision domain~\cite{Sobal24}.

%Each type of dependency can be described with corresponding similarity matrix. Since our similarity matrices are contrastive-based representations do not suffer from complete collapse. However, they may still suffer from dimensional collapse like it was shown in ~\cite{Jing22}. As we will show in our experiments (see Section~\ref{practice}), the best-performing similarity matrix is in~~\cref{thm:nearby_log_reg}.

%In practice, we observe that for the considered applied problem, the best-performing model corresponds to $\mathrm{MA(1)}$ form of dependencies from ~\cref{thm:nearby_log_reg}.

To formalize two cases of dependencies, we consider a sequential time series $\vecX_1, \vecX_2, \vecX_3, \ldots, \vecX_N$ that correspond to sequential moments in time from $1$ to $N$. Below, we introduce ground-truth similarity matrices for hard and soft dependencies, respectively.

\textbf{Hard dependency (moving average).}
Here, neighbor observations $\vecX_i$ and $\vecX_j$ with close $i$ and $j$ such that $|i - j| = 1$ are dependent.
Thus, closely lying samples can be treated as positive ones. 
Then, we can introduce the following model for the matrix $\matG$:

\begin{equation}
\label{sim:ma_sim_mat}
g_{ij} =
\begin{cases}
  1, & \text{if}\ |i - j| = 1, \\
  0, & \text{otherwise}.
\end{cases}
\end{equation}

\textbf{Soft dependency (autoregression).}
In the long-range case the closeness between elements decreases steadily and can be formalized in the following way. Let $a_{ij} = \exp\left(-\frac{(i-j)^2}{k}\right)$, where $k > 0$.

Then 
\begin{equation}
\label{sim:ar_sim_mat}
g_{ij} =
\begin{cases}
  \frac{a_{ij}}{\sum_{l = \min(i,j) + 1}^{N} a_{\min(i, j) l}} & \text{if}\ i \neq j, \\
  0, & \text{otherwise}.
\end{cases}
\end{equation}

% In the exponential case, the closeness between elements can be formalized in the following way:
% \begin{equation}
% \label{exp:matrix}
% \matG_{ij} =
% \begin{cases}
%   \exp \left(-\frac{(i-j)^2}{k} \right), & \text{if}\ i \neq j, \\
%   0, & \text{otherwise}.
% \end{cases}
% \end{equation}
% with $k > 0$. Before we can formulate the corresponding optimization problem, we need to find the constraint for this variant of a matrix.

% \begin{lemma}
% Let $\matG$ be a similarity matrix defined by ~\eqref{exp:matrix}. Then 
% \label{lem:constraint}
% \begin{equation}
% \sum_{i \neq j} \ln(\matG_{ij}) = \frac{N^2 (1 - N^2)}{6k}.
% \end{equation}

% where $k > 0$.

% \end{lemma}

% \begin{proof}
%     See proof in Appendix~\ref{app:proof_lemma}.
% \end{proof}

% Similarly to the nearby relation we also proposed the constraint from Lemma~\ref{lem:constraint} to account for specific properties of the $\matG$.
% Hence, the optimization problem for the exponential relation case is the following:
% \begin{equation}
% \label{sim:exp}
% \begin{aligned}
% \widehat{\mathbf{G}}_{d, \mathcal{R}} &= \argmin_{\matG \in \mathcal{G}} \mathrm{Tr}(\matD \matG) + \mathcal{R}(\matG), \\
% \textrm{s.t.} \sum_{i \neq j} \ln(\matG_{ij}) &= \frac{N^2 (1 - N^2)}{6k},  k > 0,\\
% %\matG_{ii} &= 0 \qquad \text{for any\,\,} i \in \overline{\{1, N\}}. \\
% \end{aligned}
% \end{equation}

\subsubsection{Estimated similarity matrix}

In the previous section, we defined different possible ground truth similarity matrices. To construct an objective function~\eqref{eq:loss_sim_matrix} we need to estimate a similarity matrix based on the embeddings. Below, we formulate an optimization problem to recover $\widehat{\matG}$.

\paragraph{Optimization problem}
Even though different ground truth similarity matrices describe hard and soft dependency relations between samples, the constraints on them are the same. This fact allows us to introduce the optimization problem suitable simultaneously for both possible dependency types between samples:
\begin{equation}
\label{opt_prob}
\begin{aligned}
\mathrm{Tr}(\matD \matG) &+ \mathcal{R}(\matG) \rightarrow \min_{\matG \in \spaceG},\\
\textrm{s.t.}
\sum_{j = i + 1}^{N} 
g_{ij} &= 1 \qquad \text{for any\,\,} i \in \overline{\{1, N\}}, \\
\sum_{i = j + 1}^{N} g_{ij} &= 1 \qquad \text{for any\,\,} j \in \overline{\{1, N\}}. \\
%\matG_{ii} = 0 \qquad \text{for any\,\,} i \in \overline{\{1, N\}}.
\end{aligned}
\end{equation}

\paragraph{Necessary condition}
Given the optimization problem with constraints~\eqref{opt_prob} for continuous dependent data, we provide analytical solution for the problem, defining appropriate $\hat{\matG}(\matZ)$. In particular, we show the necessary condition of the minimum. 

% \paragraph{Short-range relation.}
% For two appropriate regularizers, we obtain the analytical form for $\hat{\matG}$ below.

\begin{theorem}
\label{thm:nearby_log_reg}
Solving the optimization problem \eqref{opt_prob} using $\mathcal{R_{\log}}$ regularization gives the following estimation of the similarity matrix:
\begin{align*}
(\widehat{g}_{d, \mathcal{R_{\log}}})_{ij} = \frac{e^{- \frac{1}{\tau} d_f(\vecX_i, \vecX_j)} }{\sum_{k = \min(i, j) + 1}^{N} e^{-\frac{1}{\tau}d_f(\vecX_{\min(i, j)}, \vecX_k)}} \mathbbm{1}_{\{i \neq j \}},
\end{align*}
where $d_f(\vecX, \vecX') = d(f_{\theta}(\vecX), f_{\theta}(\vecX'))$.

% (\widehat{\mathbf{G}}_{d, \mathcal{R_{\log}})_{ij}} = \frac{e^{- \frac{1}{\tau}d(f_{\theta}(\vecX_i), f_{\theta}(\vecX_j))} }{\sum_{k = \min(i, j) + 1}^{N} e^{-\frac{1}{\tau}d(f_{\theta}(\vecX_{\min(i, j)}), f_{\theta}(\vecX_k))}} \mathbbm{1}_{\{i \neq j \}}.

\begin{proof}
    See proof in Appendix \ref{app:proof_nearby_log_reg}.
\end{proof}
\end{theorem}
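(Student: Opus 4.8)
The plan is to treat \eqref{opt_prob} as an equality-constrained convex program and apply the method of Lagrange multipliers (i.e.\ the KKT necessary conditions), exactly as in the semantic-independence derivation of~\eqref{eq:sim_orig}, but adapted to the two-sided half-sum constraints. First I would rewrite the objective in terms of the genuinely free variables. Since $\matD$ and $\matG$ are symmetric with zero diagonal, $\mathrm{Tr}(\matD\matG)=\sum_{i,j} d_{ij}g_{ij}=2\sum_{i<j} d_{ij}g_{ij}$ and $\mathcal{R_{\log}}(\matG)=2\tau\sum_{i<j} g_{ij}(\ln g_{ij}-1)$, so the whole functional depends only on the strictly upper-triangular entries $\{g_{ij}\}_{i<j}$, with $g_{ji}=g_{ij}$ and $g_{ii}=0$ held fixed.

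Second, and this is the conceptual heart, I would show that the constraints collapse and that the program separates. Using $g_{ij}=g_{ji}$, the column constraint $\sum_{i=j+1}^{N} g_{ij}=1$ becomes $\sum_{k>j} g_{jk}=1$, i.e.\ it coincides with the row constraint of index $j$; hence both families reduce to the single set $\sum_{k=m+1}^{N} g_{mk}=1$ for $m=1,\dots,N-1$. Because an upper-triangular variable $g_{ij}$ ($i<j$) appears only in the constraint with $m=i$, the feasible set is a product of simplices and the objective is a sum over the same disjoint blocks; the program therefore decouples into $N-1$ independent subproblems, one per index $m$: minimize $\sum_{k>m}\bigl[d_{mk}g_{mk}+\tau g_{mk}(\ln g_{mk}-1)\bigr]$ subject to $\sum_{k>m} g_{mk}=1$ and $g_{mk}\ge 0$.

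Third, for each block I would form the Lagrangian with a single multiplier $\lambda_m$ for the equality constraint, noting that the nonnegativity constraints stay inactive because the logarithmic regularizer keeps the minimizer in the interior (the derivative $d_{mk}+\tau\ln g_{mk}\to-\infty$ as $g_{mk}\to 0^+$ drives the optimum away from the boundary). Differentiating yields the stationarity condition $d_{mk}+\tau\ln g_{mk}-\lambda_m=0$, hence $g_{mk}=e^{(\lambda_m-d_{mk})/\tau}$; substituting into $\sum_{k>m}g_{mk}=1$ eliminates $\lambda_m$ and gives the softmax $g_{mk}=e^{-d_{mk}/\tau}/\sum_{l>m}e^{-d_{ml}/\tau}$. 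Writing $m=\min(i,j)$ reproduces the claimed formula, and since the expression is invariant under swapping $i$ and $j$ it confirms that $\widehat{\matG}$ is symmetric. Finally I would remark that $g\mapsto\tau g(\ln g-1)$ is strictly convex while the remaining terms are linear, so the objective is strictly convex on each simplex; the stationary point is then the unique minimizer, which in particular establishes the local-minimum (necessary-condition) claim of the theorem.

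I expect the main obstacle to be the second step: carefully justifying that the row and column half-sum constraints are equivalent under the symmetry of $\matG$, and that each off-diagonal variable lies in exactly one effective constraint, so that the coupled two-sided program genuinely separates. The vacuous edge indices ($i=N$ for rows and $j=N$ for columns, whose sums are empty) must be treated explicitly, and one must check that imposing the symmetric ansatz is consistent rather than discarding feasible directions. Once this separation is in place, the per-block entropy-regularized softmax derivation is routine.
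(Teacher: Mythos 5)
Your proof is correct, and its core — stationarity of a Lagrangian for the entropy-regularized linear objective, yielding a softmax — is the same engine the paper uses. But your organization is genuinely different and, in several respects, tighter. The paper writes a single Lagrangian with two multiplier families $\alpha_i$ (row half-sums) and $\beta_j$ (column half-sums), treats \emph{all} off-diagonal $g_{ij}$ as independent variables (thus implicitly relaxing the symmetry built into $\mathcal{G}$), differentiates, and obtains separate softmax formulas for $j>i$ and $j<i$ that it then merges via $\min(i,j)$; symmetry of the solution is a happy a posteriori fact that the paper never comments on. You instead impose the symmetry of $\matG$ from the start, observe that it makes the column constraints literally coincide with the row constraints, and thereby decouple the program into $N-1$ independent entropy-regularized simplex problems — after which the softmax is a one-multiplier textbook computation. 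Your route buys three things the paper's does not: (i) it works on the actual feasible set (symmetric matrices) rather than a relaxation, so no consistency check of the symmetric ansatz is needed afterwards; (ii) the strict-convexity argument on each simplex, plus the observation that the log term keeps the minimizer off the boundary $g_{ij}=0$, gives uniqueness and \emph{global} minimality in one stroke, essentially subsuming the paper's separate Theorem~\ref{thm:suf_nearby_log_reg} (which only checks positive definiteness of the Lagrangian Hessian); (iii) you explicitly flag the vacuous constraints at $i=N$ and $j=N$ (empty sums cannot equal $1$), a defect in the statement of \eqref{opt_prob} that the paper silently ignores — the constraint index should run over $\{1,\dots,N-1\}$. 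What the paper's version buys in exchange is that its explicit multipliers $\alpha_i,\beta_j$ and first-derivative expressions are reused verbatim in the Hessian computation of Theorem~\ref{thm:suf_nearby_log_reg}. One caveat applicable to both proofs: as written, \eqref{opt_prob} minimizes over $\spaceG$ (right-stochastic matrices), which is incompatible with the half-sum constraints and symmetry; both you and the authors implicitly (and correctly) read the feasible set as $\mathcal{G}$ with only the half-sum constraints imposed.
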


% \paragraph{Long-range relation.}
% We obtain the analytical form for $\hat{\matG}$ below.

% \begin{theorem}
% \label{thm:exp}
%  Solving the optimization problem~\eqref{sim:exp} using contsraint from~\eqref{lem:constraint} without regularizer gives the following estimation of the similarity matrix:

%  % \mathbbm{1}_{\{i \neq j \}}
% $$
% %(\widehat{\mathbf{G}}_{d_{ij}}) = \frac{\exp(\frac{1}{N^2 - N} (C + \sum_{i \neq j}\ln\left(d(f_\theta(x_i), f_\theta(x_j))\right)}{d(f_\theta(x_i), f_\theta(x_j))}\mathbbm{1}_{\{i \neq j \}}
% (\widehat{\mathbf{G}}_{d})_{ij} = 
% \begin{cases}
%     \frac{\exp\left(\frac{1}{N^2 - N} (C + \sum_{i \neq j}\ln\left(d(f_\theta(\vecX_i), f_\theta(\vecX_j))\right) \right)}{d(f_\theta(\vecX_i), f_\theta(\vecX_j))}, &i\neq j,\\
%     0, &i=j,
% \end{cases}
% $$
% where
% $$
% C = \frac{N^2 (1 - N^2)}{6k}, k > 0.
% $$
% \textit{Note:} constant $C$  was obtained in Lemma~\ref{lem:constraint}.
% \end{theorem}
% \begin{proof}
%     The proof is completely similar to the one given in Appendix \ref{app:proof_nearby_log_reg}.
% \end{proof}

\paragraph{Sufficient condition}
The theorem above represents the necessary condition for the extremum. However, it's not sufficient. Below, we will demonstrate that the obtained similarity matrix indeed yields the minimum. The key point of such a nice property is the presence of the regularizer. To facilitate understanding of the last property, we provide the proof of sufficiency for the estimated similarity matrix directly in the text.

\begin{theorem}
\label{thm:suf_nearby_log_reg}
 The estimated similarity matrix from Theorem~\ref{thm:nearby_log_reg} is a unique minimum of the optimization problem~\eqref{opt_prob} with Logarithmic regularizer. 
\end{theorem}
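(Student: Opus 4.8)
The plan is to prove uniqueness by exploiting strict convexity of the objective over the feasible set, so that the stationary point already identified in Theorem~\ref{thm:nearby_log_reg} is automatically the unique global minimizer. First I would rewrite the objective of \eqref{opt_prob} in scalar form, $F(\matG) = \mathrm{Tr}(\matD\matG) + \mathcal{R_{\log}}(\matG) = \sum_{i\neq j} d_{ij} g_{ij} + \tau\sum_{i\neq j} g_{ij}\bigl(\ln g_{ij} - 1\bigr)$, using that $\matD$ and $\matG$ have zero diagonal. The feasible region is the intersection of the affine subspace cut out by the two families of sum-to-one constraints with the nonnegative orthant $\{g_{ij}\ge 0\}$; this is a convex polytope, which I will call $\mathcal{C}$.

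Next I would establish strict convexity of $F$ on $\mathcal{C}$. The trace term is linear, hence contributes nothing to curvature. For the regularizer, each summand is $\tau\, h(g_{ij})$ with $h(x)=x(\ln x-1)$, and $h''(x)=1/x>0$ for $x>0$, while the mixed second derivatives vanish. Thus the Hessian of $F$ is diagonal with strictly positive entries $\tau/g_{ij}$ wherever every off-diagonal coordinate is positive, so $F$ is strictly convex on the open positive orthant; extending $h$ by $h(0)=0$ keeps $h$ convex and continuous on $[0,\infty)$ and strictly convex on $(0,\infty)$. I would then note that the candidate matrix $\widehat{\matG}$ of Theorem~\ref{thm:nearby_log_reg} has all its admissible off-diagonal entries strictly positive, since each equals a ratio of exponentials; hence $\widehat{\matG}$ lies in the relative interior of $\mathcal{C}$, where the curvature argument applies cleanly.

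The concluding step is to combine stationarity with convexity. Theorem~\ref{thm:nearby_log_reg} already shows $\widehat{\matG}$ satisfies the first-order (KKT) conditions of \eqref{opt_prob}; for a convex program with affine equality constraints a stationary feasible point is a global minimizer, and strict convexity of $F$ along every segment of $\mathcal{C}$ forces that minimizer to be unique. Formally, if $\matG_1\neq\matG_2$ were two minimizers, then for $t\in(0,1)$ the point $t\matG_1+(1-t)\matG_2$ is feasible and, by strict convexity, attains a strictly smaller value, a contradiction; hence $\widehat{\matG}$ is the unique minimum.

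I expect the main obstacle to be the boundary behaviour of the entropy-type term: because $h''(x)=1/x$ blows up as $x\to 0$ and $h$ is only strictly convex on the open orthant, one must argue carefully that strict convexity still transfers to uniqueness on the closed polytope $\mathcal{C}$. The clean way around this is the observation used above --- that any segment joining two distinct feasible points has its relative interior inside the open positive orthant (at most the two endpoints can touch a face $\{g_{ij}=0\}$), so strict convexity holds on the open segment and the midpoint contradiction goes through regardless of any boundary contact.
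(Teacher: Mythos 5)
Your proposal is correct in substance, but it takes a genuinely different route from the paper's proof, and one step of your justification needs repair.

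The paper argues locally: it forms the Lagrangian of \eqref{opt_prob}, computes its Hessian in the variables $g_{ij}$, observes that all mixed second derivatives vanish while $\partial^{2}\mathcal{L}/\partial g_{ij}^{2} = \tau/g_{ij} > 0$, and concludes from positive definiteness of this quadratic form that the stationary point of Theorem~\ref{thm:nearby_log_reg} is a minimum. Your proof packages the very same key computation (the diagonal Hessian $\tau/g_{ij}$, contributed entirely by $\mathcal{R}_{\log}$, the linear trace term contributing nothing) as a global convex-analysis statement: the feasible set is a convex polytope, the objective is strictly convex on the open positive orthant, so the KKT point found in Theorem~\ref{thm:nearby_log_reg} (which has all admissible entries strictly positive) is a global minimizer, and strict convexity forces uniqueness. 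This difference is not cosmetic: a positive definite Hessian of the Lagrangian at a stationary point certifies only a \emph{strict local} minimum, so the paper's argument, read literally, does not by itself justify the word ``unique'' (nor globality) in the theorem statement, whereas your convexity argument does. In that sense your route proves exactly what the theorem claims.

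The step that needs fixing is the topological claim in your last paragraph: it is \emph{not} true that a segment joining two distinct feasible points has its relative interior inside the open positive orthant. The constraints in \eqref{opt_prob} allow many zero entries (e.g., two feasible matrices can both place all of row $1$'s mass on $g_{12}$, so both have $g_{13}=0$); in any coordinate where \emph{both} endpoints vanish, the entire segment lies on that face. The conclusion you want survives via a coordinate-wise argument instead: writing the objective restricted to the segment as a sum over pairs $(i,j)$ of terms $d_{ij}\,\ell_{ij}(t) + \tau\, h(\ell_{ij}(t))$ with $h(x)=x(\ln x -1)$ and $\ell_{ij}(t) = t\,(g_1)_{ij} + (1-t)\,(g_2)_{ij}$, every coordinate where the two endpoints agree (possibly both zero) contributes a constant, while any coordinate where they differ has $\ell_{ij}(t)>0$ with nonzero slope on $(0,1)$, making its term strictly convex there. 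Since two distinct minimizers must differ in at least one coordinate, the restriction of the objective to the open segment is strictly convex, and your midpoint contradiction goes through. With that substitution, your proof is complete.
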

\begin{proof}
\label{proof:suf_nearby_log_reg}
To check the sufficient condition of the local extremum, we need to calculate the Hessian of the Lagrange function, which is expressed in the following way:
\begin{align*}
\mathcal{L} = \sum_{i \neq j}d(f_{\theta}(\vecX_i), f_{\theta}(\vecX_j)) g_{ij} + \tau \sum_{i \neq j}  g_{ij}\left(\ln(g_{ij}) - 1 \right) + \\ \sum_{i=1}^{N} \alpha_{i} \left(\sum_{j=i+1}^{N}g_{i,j} - 1 \right) + \sum_{j=1}^{N} \beta_{j} \left(\sum_{i=j+1}^{N}g_{i,j} - 1 \right).% + \sum_{i} \gamma_{i} \mathbf{G}_{i,i},
\end{align*}

By definition, the Hessian of the Lagrange function is 
$$
d^2\mathcal{L} = \sum_{i\neq j,i^{'}\neq j^{'}} \frac{\partial^{2}{\mathcal{L}}}{\partial{g_{ij}}\partial{g_{i^{'}j^{'}}}} dg_{ij} dg_{i^{'}j^{'}}.
$$

To check the positivity of this quadratic form (Hessian), we need to calculate the second-order derivatives. To do this, let's recall the expression of the first partial derivative:
$$
\frac{\partial{\mathcal{L}}}{\partial{g_{ij}}} =
\begin{cases}
d(f_{\theta}(\vecX_i), f_{\theta}(\vecX_j)) + \tau \ln(g_{ij}) + \alpha_i, & j > i,\\
d(f_{\theta}(\vecX_i), f_{\theta}(\vecX_j)) + \tau \ln(g_{ij}) + \beta_j, & j < i,\\
%\gamma_i, & j = i,\\
\end{cases}
$$

It's obvious that $\frac{\partial^{2}{\mathcal{L}}}{\partial{g_{ij}}\partial{g_{i^{'}j^{'}}}} = 0$ for different pairs $ij$ and $i^{'}j^{'}$. If $ij = i^{'}j^{'}$ then we obtain $\frac{\partial^{2}{\mathcal{L}}}{\partial{g^2_{ij}}} = \frac{\tau}{g_{ij}} > 0$. 

Hence, the Hessian is a positive definite quadratic form. It's important to note that the quadratic form is positive due to the regularizer. Thus, the estimated similarity matrix from Theorem \ref{thm:nearby_log_reg} is a minimum.
\end{proof}

% In a completely similar way we can obtain Theorems for other estimated similarity matrices. Since the pipeline of the proofs is the same as in Theorem~\ref{thm:suf_nearby_log_reg}, then we will omit them to reduce the overall volume of the text.

% For the exponential relation we don't have any regularizer. However non-linear contstraint delivers the minimum.

% \begin{theorem}
% \label{thm:suf_exp}
%  The estimated similarity matrix from Theorem~\ref{thm:exp} is a unique minimum of the optimization problem~\ref{sim:exp}. 
% \end{theorem}

During the proving of the sufficiency condition, we obtained the vital role of the regularizer --- it ensures that our estimated similarity matrix is indeed a unique minimum of the corresponding optimization problem~\eqref{opt_prob}.

\paragraph{Collapse problem}
The analysis of the similarity matrix obtained in  Theorem~\ref{thm:nearby_log_reg} above shows the avoidance of the complete collapse problem. However, such a matrix may still suffer from dimensional collapse, as shown in~\cite{Jing22}.

\subsubsection{Spatio-temporal objective function}
\label{method: objective function}

After defining the ground-truth similarity matrix and the estimated one, we are ready to present our loss function, which is defined for spatio-temporal data. 

Let $X \in \mathbb{R}^{B \times T \times C}$ be an input tensor for the loss function, containing time series for each spatial region, where:
\begin{itemize}
    \item $B$ --- the batch size, the number of different time series.
    \item $T$ --- the history length of each time series.
    \item $C$ --- the embedding size of each time series element.
\end{itemize}

In~\cite{Yue22}, the loss function was divided into \textit{instance} and \textit{temporal} parts. 
We rename the \textit{instance} component to the \textit{spatial} one to highlight the considered dependency type:
$$
\mathcal{L} = \mathcal{L}_{\mathrm{spatial}} + \mathcal{L}_{\mathrm{temporal}}.
$$

\textbf{Spatial loss $\mathcal{L}_{\mathrm{spatial}}$.} Since samples at the same time moments from different time series are semantically independent, we can treat them as images --- make closer positive pairs and far away negative ones. Contrasting performs across the batch dimension. So, the spatial loss is the same as the instance loss in TS2Vec --- standard contrastive objective function based on the similarity matrix~\eqref{eq:sim_orig} applies to the tensor $X[B, None, C]$ at each moment $t \in [1, T]$.

\textbf{Temporal loss $\mathcal{L}_{\mathrm{temporal}}$.} The general form of temporal loss is presented in \eqref{eq:loss_sim_matrix}. To construct an objective function~\eqref{eq:loss_sim_matrix} we need to define both ground truth and estimated similarity matrices. According to the estimated similarity matrix, we use the one obtained from Theorem~\ref{thm:nearby_log_reg}. However, regarding the ground truth, we have two options: the hard dependency relation matrix~\eqref{sim:ma_sim_mat} and the soft dependency one~\eqref{sim:ar_sim_mat}. Thus, we have two different objective functions, resulting in two distinct methods. For brevity, we report only the best results in our experiments, referring to our method as \textbf{DepTS2Vec (Ours)}.
%Indeed, \textbf{SoftTS2Vec (Ours)} is the proposed approach, where the loss function~\eqref{eq:loss_sim_matrix} is formulated using the ground truth similarity matrix from the short-range relation~\eqref{sim:ma_sim_mat} and the estimated similarity matrix provided by Theorem~\ref{thm:nearby_log_reg}. Correspondingly, the \textbf{SoftTS2Vec (AR)} employs the ground truth similarity matrix~\eqref{sim:ar_sim_mat}, while retaining the same estimated similarity matrix as used in SoftTS2Vec (Ours).

To perform temporal contrasting, the objective function was applied to the tensor $X[None, T, C]$ for each region $b \in [1, B]$ in the batch. Following~\cite{Yue22}, we use hierarchical contrasting in both dimensions to capture the local and global contexts of the data. 

\section{Experiments}
\label{practice}

\subsection{Baselines}
We compare our methods with different modern approaches. To make a correct comparison of the \textit{temporal loss}, we use the \textit{spatial loss} described in Section~~\ref{method: objective function} fixed for each of the baselines. Below, we briefly describe how each of them processes temporal dependencies: 

\begin{itemize}
    \item \textbf{TS2Vec}~\cite{Yue22}. In the original version, TS2Vec uses standard ground truth and the estimated similarity matrices for the semantic independent case described in Section~\ref{method:hard_contrastive_learning}.
    \item \textbf{SoftCL}~\cite{lee2024soft}. This method employs a heuristic soft-contrastive similarity matrix, where the closeness between time moments $i$ and $j$ is described by the following formula: $g_{ij} = 2\cdot\sigma(-\tau \cdot |i - j|)$, where $\sigma$ is a sigmoid function, $\tau$ is a hyperparameter.
\end{itemize}

In the subsequent sections, we evaluate the quality of our embeddings across a broad range of downstream tasks. We begin with problems characterized solely by \emph{temporal dependencies} and progressively transition to more complex climatic scenarios within the \emph{spatio-temporal} domain.

\subsection{Temporal downstream problem}

% \subsubsection{Time series classification}

% blablabla

% \subsubsection{Data}
% To assess how well our self-supervised representations capture temporal structure in both univariate and multivariate sequences, we evaluate them on standard time series classification benchmarks from the UCR~\cite{dau2019ucr} and UEA~\cite{bagnall2018uea} archives. The UCR archive contains 128 datasets with univariate time series, whereas the UEA archive includes 30 datasets with multivariate sequences. 
% Together, they cover a broad range of domains such as motion capture, ECG signals, wearable sensors, and spectrographs, and serve as widely adopted benchmarks for time series classification.

% Following the self-supervised pretraining stage described in Section~\ref{exp:architecture}, we freeze the encoder and train a linear classifier on top of the learned representations. 
% For each dataset, we adopt the canonical train/test splits provided in the archives and report classification accuracy on the test set. To handle different sequence lengths and sampling rates, we uniformly resample input sequences to a fixed temporal resolution and normalize each feature independently.

% We evaluate our method against the TS2Vec and SoftCL baselines described above, ensuring that each method uses the same spatial loss component for a fair comparison. 
% Results averaged across multiple datasets are reported in Table~\ref{tab:ucr_uea_results}, demonstrating the ability of our temporal loss to produce meaningful representations that generalize well across both univariate and multivariate classification tasks.

\subsubsection{Downstream problems description}

To assess the effectiveness of our self-supervised representations in capturing temporal patterns, we evaluate the embeddings on a wide range of time series classification tasks. These tasks are drawn from standard, publicly available benchmarks —-- the UCR~\cite{dau2019ucr} and UEA~\cite{bagnall2018uea} archives. Following standard protocol~\cite{10.5555/3454287.3454705, Yue22, lee2024soft}, we extract embeddings from a frozen encoder and train a classifier on top of them. To evaluate the overall performance we measure accuracy and mean rank across datasets.

\subsubsection{Data}

We utilize the well-established time series classification archives from the University of California, Riverside (UCR) and the University of East Anglia (UEA), both commonly used for benchmarking classification algorithms. The UCR archive offers 128 univariate datasets, while the UEA archive includes 30 multivariate datasets. These collections cover a wide range of domains, such as motion capture, ECG signals, wearable sensor data, and spectrographs.

\subsubsection{Downstream pipeline}

After SSL pretraining, we freeze the encoder and extract representations for all sequences. An SVM with RBF kernel is trained on top of these embeddings to perform classification, following the same evaluation protocol as in~\cite{Yue22}.

\subsubsection{SSL architecture}

We adopt the TS2Vec encoder~\cite{Yue22}, a multi-layer temporal convolutional network that generates hierarchical representations. Our temporal loss is applied to the resulting embeddings during self-supervised training, see details in Appendix~\ref{app:implementation_details_temp}.

\subsubsection{Results}

In Table~\ref{tab:ucr_uea_results} we present the classification results on UCR and UEA archives using \emph{average accuracy} and \emph{average rank} metrics. Our proposed method surpasses previous approaches on both UCR and UEA archives. In particular we achieve the highest mean accuracy on both benchmarks -- $84.64$\% on UCR and $72.89$\% on UEA -- outperforming TS2Vec by $2.08$\% and $4.17$\%, and SoftCL by $1.06$\% and $1.20$\%, respectively.

% \begin{table}[ht]
% \centering
% \caption{Comparison of average classification accuracy $(\uparrow)$ and average rank $(\downarrow)$ on UCR and UEA archives. Best results are marked in \textbf{bold}, second one are \underline{underlined}.}
% \label{tab:ucr_uea_results}
% \begin{tabular}{lcccc}
% \hline
% Model & \multicolumn{2}{c}{UCR} & \multicolumn{2}{c}{UEA} \\
% \cline{2-5}
%  & Accuracy & Rank & Accuracy & Rank \\
% \hline
% TS2Vec~\cite{Yue22} & 82.56  & 2.35 & 68.72 & 2.58 \\
% SoftCL~\cite{lee2024soft} & \underline{83.58} & \underline{1.86} & \underline{71.69} & \underline{1.83} \\
% DepTS2Vec (Ours) & \textbf{84.64} & \textbf{1.79} & \textbf{72.89} & \textbf{1.58} \\
% \hline
% \end{tabular}
% \end{table}

\begin{table}[ht]
\centering
\caption{Comparison of average classification accuracy $(\uparrow)$ and average rank $(\downarrow)$ on UCR and UEA archives. Best results are marked in \textbf{bold}, second one are \underline{underlined}.}
\label{tab:ucr_uea_results}
\begin{tabular}{lcccc}
\toprule
\multirow{3}{*}{\textbf{Model}} & \multicolumn{2}{c}{\textbf{UCR}} & \multicolumn{2}{c}{\textbf{UEA}} \\
\cmidrule{2-5}
 & Accuracy & Rank & Accuracy & Rank \\
\midrule
TS2Vec~\cite{Yue22} & 82.56  & 2.35 & 68.72 & 2.58 \\
SoftCL~\cite{lee2024soft} & \underline{83.58} & \underline{1.86} & \underline{71.69} & \underline{1.83} \\
DepTS2Vec (Ours) & \textbf{84.64} & \textbf{1.79} & \textbf{72.89} & \textbf{1.58} \\
\bottomrule
\end{tabular}
\end{table}

% Additionally, we analyze the results across different application domains represented in the UCR and UEA archives. 
% As shown in Table~\ref{tab:ucr_domain_ranks} and Table~\ref{tab:uea_domain_ranks}, our method consistently outperforms TS2Vec in domains such as sensors, devices, electrocardiography (ECG), electroencephalogram (EEG), motion capture, human action recognistion (HAR). 
% We hypothesize that this improvement stems from the fact that in these domains, temporal continuity is a valid assumption: the underlying signals tend to evolve smoothly over time with relatively few abrupt transitions. 
% Since our temporal loss encourages the model to capture gradual changes in latent representations, it is particularly effective in such settings.

To understand the performance of our approach, in Table~\ref{tab:domain_ranks_combined} we compared the mean rank across domains represented in the UCR~\cite{dau2019ucr} and UEA~\cite{bagnall2018uea} time series classification archives. As shown in Table~\ref{tab:domain_ranks_combined}, our method exhibits a clear superiority over TS2Vec with respect to mean rank. The main reason for this result is that all highlighted domains share a common characteristic: they exhibit relatively smooth temporal evolution with few abrupt transitions. This aligns with our theoretical constraint on the data --— the \emph{data continuity} assumption.

\begin{table}[h]
\centering
\caption{Mean rank $(\downarrow)$ by dataset type across methods on UCR and UEA benchmarks.}
\label{tab:domain_ranks_combined}
\begin{tabular}{@{}c|lc|ccc@{}}
\toprule
& \textbf{Type} & \textbf{\# Datasets} & \textbf{TS2Vec} & \textbf{SoftCL} & \textbf{DepTS2Vec} \\
\midrule
\multirow{6}{*}{\rotatebox[origin=c]{90}{\textbf{UCR}}}
& HAR           & 21 & 2.21 & \textbf{1.88} & 1.90 \\
& SENSOR        & 17 & 2.24 & \underline{2.03} & \textbf{1.74} \\
& DEVICE        & 12 & 2.33 & \underline{2.08} & \textbf{1.58} \\
& ECG           & 7  & 2.50 & \underline{2.43} & \textbf{1.07} \\
& MOTION        & 6  & 2.25 & \textbf{1.83} & \underline{1.92} \\
& TRAFFIC       & 2  & \underline{3.00} & \textbf{1.50} & \textbf{1.50} \\
\midrule
\multirow{4}{*}{\rotatebox[origin=c]{90}{\textbf{UEA}}}
& HAR       & 9 & 2.22 & \underline{2.11} & \textbf{1.67} \\
& EEG       & 6 & \underline{2.25} & 2.33 & \textbf{1.42} \\
& MOTION    & 4 & 2.88 & \underline{1.75} & \textbf{1.38} \\
& ECG       & 2 & \underline{3.00} & \textbf{1.50} & \textbf{1.50} \\
\bottomrule
\end{tabular}
\end{table}

\subsubsection{Ablation study}
Table~\ref{tab:domain_ranks_ablation} shows that an optimal similarity measure between samples depends on the temporal characteristics of each dataset type. 
Specifically, HAR datasets are captured most effectively by short-range dependencies, with AR(k=1) for UCR and MA for UEA, reflecting the short-term nature of human activity signals. 
In contrast, SENSOR and DEVICE datasets benefit from long-range context, captured effectively by AR(k=10), while ECG signals are best represented with intermediate horizon, AR(k=5). 
For MOTION, the optimal AR horizon differs across archives: AR(k=5) in UCR and AR(k=1) in UEA.
These findings demonstrate that different temporal structures are best captured by different specifications, providing strong empirical support for the flexibility and generality of our framework.

\begin{table}[h]
\centering
\caption{Mean rank $(\downarrow)$ by dataset type and model specification on the UCR and UEA benchmarks.}
\label{tab:domain_ranks_ablation}
\begin{tabular}{@{}c|l|cccc@{}}
\toprule
& \textbf{Type} & \textbf{MA} & \textbf{AR(k=1)} & \textbf{AR(k=5)} & \textbf{AR(k=10)} \\
\midrule
\multirow{6}{*}{\rotatebox[origin=c]{90}{\textbf{UCR}}}
& HAR           & 3.07 & \textbf{2.00} & \underline{2.21} & 2.71 \\
& SENSOR        & 2.82 & 2.53 & \underline{2.44} & \textbf{2.21} \\
& DEVICE        & 2.83 & 2.92 & \underline{2.29} & \textbf{1.96} \\
& ECG           & \underline{2.29} & 2.64 & \textbf{2.00} & 3.07 \\
& MOTION        &  2.58 &  \underline{2.42} & \textbf{2.00} & 3.00 \\
& TRAFFIC       & \underline{2.25} & \textbf{1.75} & 3.00 & 3.00 \\
\midrule
\multirow{4}{*}{\rotatebox[origin=c]{90}{\textbf{UEA}}}
& HAR           & \textbf{2.11} & \underline{2.22} & 2.50 & 3.17 \\
& EEG           & \textbf{2.33} & \underline{2.50} & 2.58 & 2.58 \\
& MOTION        & 3.13 & \textbf{1.13} & 3.00 & \underline{2.75} \\
& ECG           & 3.50 & 3.50 & \textbf{1.50} & \textbf{1.50} \\
\bottomrule
\end{tabular}
\end{table}

Different types of data contain varying amounts of information. Therefore, it is interesting to determine how much information is required to represent a specific type of data. To assess this across all model configurations, we measured the effective dimensionality by counting the number of principal components needed to capture $95\%$ of the variance~\cite{ansuini2019intrinsic}. We use an embedding size of 320, in accordance with the standard pipeline~\cite{Yue22}. As shown in Table~\ref{tab:effective_dimensionality}, at most only $21\%$ of the embedding dimensions are effectively utilized, indicating a \textit{dimensional collapse}.
This is a known property of contrastive learning methods~\cite{jing2021understanding}, but it also suggests that time series data inherently require fewer latent dimensions for effective representation compared to other domains like computer vision. 
The stability of this metric implies that the choice of temporal dependency model (hard MA vs. soft AR) impacts downstream task performance more significantly than the expressivity of the embedding space.

\begin{table}[h]
\centering
\caption{Mean effective dimensionality of embeddings across dataset types and model specifications.}
\label{tab:effective_dimensionality}
\begin{tabular}{@{}c|l|cccc@{}}
\toprule
& \textbf{Type} & \textbf{MA} & \textbf{AR(k=1)} & \textbf{AR(k=5)} & \textbf{AR(k=10)} \\
\midrule
\multirow{6}{*}{\rotatebox[origin=c]{90}{\textbf{UCR}}}
& HAR           & 58.62 & 58.57 & 57.71 & 57.95\\
& SENSOR        & 65.76 & 65.82 & 65.53 & 65.29\\
& DEVICE        & 38.42 & 38.17 & 38.08 & 37.92\\
& ECG           & 28.14 & 28.57 & 26.29 & 26.43 \\
& MOTION        & 54.33 & 53.83 & 52.50 & 52.33 \\
& TRAFFIC       & 3.50 & 3.50 & 4.00 & 4.00 \\
\midrule
\multirow{4}{*}{\rotatebox[origin=c]{90}{\textbf{UEA}}}
& HAR       & 55.89 & 55.67 & 55.33 & 55.24 \\
& EEG       & 63.67 & 63.17 & 61.67 & 61.67 \\
& MOTION    & 44.25 & 44.25 & 43.75 & 43.75 \\
& ECG       & 2.00  & 2.00  & 2.00  & 2.00 \\
\bottomrule
\end{tabular}
\end{table}

\subsection{Spatio-temporal downstream problems}
Among various climate-related tasks, we selected two extremely essential problems for society: drought prediction and temperature forecasting. Below, we describe the problem statements for both tasks.

\subsubsection{Downstream problems description}
\label{exp:drought_prediction}
\paragraph{Drought prediction}Given the available historical data on the drought index for a particular region, we aim to predict the spatial distribution of drought over the area up to half a year in advance. 
Following~\cite{Mcpherson2022}, we use the Palmer Drought Severity Index, PDSI~\cite{Alley84}, as a forecasting variable. We utilized freely available PDSI data from Google Earth Engine~\cite{gorelick2017google}.

Instead of solving the regression problem, i.e., predicting PDSI directly, we consider a binary classification task: whether it is a drought or not. To convert the regression task to a classification problem, we considered a case as a drought if the PDSI is less than $-2$ following~\cite{Mcpherson2022, Marusov24}.

\paragraph{Temperature forecasting}
The overall problem description is the same as in drought forecasting. In the case of temperature forecasting, we aim to predict a spatial temperature distribution for the next week. 
In particular, we consider a temperature at a height of two meters above the Earth's surface. The data is taken from the public WeatherBench reference dataset ~\cite{rasp2020weather}, which collects historical data on various climate characteristics. Unlike drought prediction, we treat temperature forecasting as a regression task.

\subsubsection{Downstream problem supervised solution pipeline}
The general formalized description of the supervised model based on SSL embeddings for the downstream spatio-temporal problems above is the following:
\begin{itemize}
    \item Input: History of the climatic variable --- \\ 3D-tensor: time, longitude, and latitude.
    \item Output: Predicted distribution of the climatic variable over the given area --- a 2D tensor for a selected forecasting horizon.
\end{itemize}

To solve the target task using the SSL paradigm, we divide our model into two parts: a \textbf{SSL-pretrained encoder} and a \textbf{downstream classifier}. Using the contrastive SSL architecture described in Section~\ref{exp:architecture}, we pretrain the encoder. After SSL pretraining, we freeze the weights and provide feature representations of the input tensors for downstream problems.

\subsubsection{Data}
For the \textbf{self-supervised pretraining}, we use temperature and precipitation data from the WeatherBench dataset covering a period from $1979$ to $2016$ with daily time resolution. To evaluate the performance of the embeddings on the \textbf{downstream problems}, we used:
\begin{itemize}
    \item PDSI index collected for four different regions --- Madhya Pradesh (India), Missouri (USA), Northern Kazakhstan, and Eastern Europe --- to estimate the \textbf{drought prediction} quality. The data covers a time from $1958$ to $2022$ with a monthly resolution. We utilized the TerraClimate Monthly dataset~\footnote{\url{https://developers.google.com/earth-engine/datasets/catalog/IDAHO_EPSCOR_TERRACLIMATE}} to obtain the PDSI data for chosen regions. %Their brief characteristics are summarized in Appendix~\ref{app:data_statistics}.
    \item Temperature data from $2017$ to $2018$ for the \textbf{temperature forecasting} task.
\end{itemize}

\subsubsection{SSL model architecture}
\label{exp:architecture}

% \begin{figure}[h]
%     \centering
%     \includegraphics[width=0.5\textwidth]{figures/ts2vec.pdf}
%     \caption{Original TS2Vec architecture. The figure was taken from~\cite{bazarova2024normalizing}}
%     \label{fig:ST_TS2Vec}
% \end{figure}

 %On the Fig.~\ref{fig:ST_TS2Vec} we show the original architecture of the TS2Vec. 
 Below, we describe the components that we changed in the original TS2Vec to extend it to the spatio-temporal case:

\begin{itemize}
    \item \textbf{Augmentations.} We use standard spatial augmentations from computer vision following~\cite{Wang2022}: Gaussian blur, average pooling.
    \item \textbf{Encoder.} To account for both spatial and temporal dependencies, we use ConvLSTM architecture~\cite{shi2015convolutional}. The output dimension from ConvLSTM is $H \times W \times C_{cell}$ where $H, W$ are spatial sizes of the region and $C_{cell}$ is the embedding size of a single cell. We use one-dimensional convolutions and a fully connected layer to transform the tensor from $H \times W \times C_{cell}$ to $B \times T \times C$ dimensions.
    \item \textbf{Loss function.} Detailed description of our objective function provided in Section~\ref{method: objective function}. 
\end{itemize}

In Appendix~\ref{app:implementation_details_ST} we provide the implementation details for the model. 
% TODO: history length, ...

\subsubsection{Results}

\begin{table}[h]
\caption{ROC-AUC $(\uparrow)$ values for drought prediction for different forecasting horizons and climatic regions}
\centering
\begin{tabular}{llcccccc}
\toprule
\multirow{3}{*}{\textbf{Region}} & \multirow{3}{*}{\textbf{Model}} & \multicolumn{5}{c}{\textbf{Forecasting horizons}} \\ 
\cmidrule{3-7}
& & 1 & 2 & 3 & 4 & 5 \\ \midrule
& TS2Vec~\cite{Yue22} & \textbf{0.62} & 0.56 & 0.53 & \underline{0.52} & 0.51 \\
& SoftCL~\cite{lee2024soft} & 0.59 & \underline{0.58} & \underline{0.55} & \textbf{0.54} & \underline{0.52} \\
{\multirow{-3}{*}{Madhya}} & DepTS2Vec (Ours) & \underline{0.61} & \textbf{0.59} & \textbf{0.57} & \textbf{0.54} & \textbf{0.54} \\
% & TS-BYOL & 0.5 & 0.706 & 0.768 \\ 
% & SN-TS--BYOL & 0.694 & 0.766 & 0.789 \\ 
% & TS2Vec & 0.688 & {\ul 0.788} & 0.816\\
% & SN-TS2Vec, cos & \textbf{0.726} & 0.775 & \textbf{0.852}\\
% {\multirow{-3}{*}{Madhya}} & SoftTS2Vec (AR) &  0.56 &  0.54 & 0.53 & 0.52 & 0.50 \\
\midrule
%\multicolumn{2}{l}{}                                              & 100 & 200 & 400 \\ \hline
& TS2Vec~\cite{Yue22} & 0.71 & \underline{0.68} & 0.65 & 0.61 & 0.58  \\
& SoftCL~\cite{lee2024soft} & \textit{0.75} & \textbf{0.73} & \textbf{0.70} & \textbf{0.67} & \textbf{0.63} \\
{\multirow{-3}{*}{Missouri}} & DepTS2Vec (Ours) & \textbf{0.76} & \textbf{0.73} & \underline{0.69} & \underline{0.65} & \underline{0.62}\\
% & TS-BYOL & 0.5 & 0.796 & 0.933 \\ 
% & SN-TS-BYOL & 0.5 & 0.636 & 0.722 \\ 
% & TS2Vec & {\ul 0.873} & \textbf{0.97} & {\ul 0.952} \\
% & SN-TS2Vec, cos & 0.736 & {\ul 0.909}   & \textbf{1}  \\
% {\multirow{-3}{*}{Missouri}} & SoftTS2Vec (AR) & 0.69 & 0.65 & 0.6 & 0.54 & 0.5 \\ 
%\hline
%\multicolumn{2}{l}{}                                              & 60 & 100 & 200 \\ 
\midrule
& TS2Vec~\cite{Yue22} & 0.62 & 0.57 & 0.54 & 0.52 & 0.5 \\
& SoftCL~\cite{lee2024soft} & \underline{0.69} & \underline{0.65} & \underline{0.61} & \underline{0.58} & \underline{0.54} \\
{\multirow{-3}{*}{Kazakhstan}} & DepTS2Vec (Ours) & \textbf{0.71} & \textbf{0.67} & \textbf{0.63} & \textbf{0.59} & \textbf{0.56} \\
% & TS-BYOL & 0.316 & 0.398 & 0.26 \\ 
% & SN-TS-BYOL & 0.403 & 0.416 & 0.418 \\ 
% & TS2Vec & {\ul 0.476} & {\ul 0.467} & 0.444 \\
% & SN-TS2Vec, cos &  {\ul 0.476} &  0.306 & {\ul 0.663}\\
% {\multirow{-3}{*}{Kazakhstan}} & SoftTS2Vec (AR) & 0.61 & 0.58 & 0.55 & 0.52 & 0.50 \\ 
\midrule
& TS2Vec~\cite{Yue22} & 0.5 & 0.5 & 0.5 & \underline{0.5} & \textbf{0.5} \\
& SoftCL~\cite{lee2024soft} & \textbf{0.64} & \underline{0.6} & \underline{0.54} & \underline{0.5} & \textbf{0.5}\\
{\multirow{-3}{*}{Europe}} & DepTS2Vec (Ours) & \textbf{0.64} & \textbf{0.61} & \textbf{0.55} & \textbf{0.51} & \textbf{0.5} \\
% & TS-BYOL & 0.316 & 0.398 & 0.26 \\ 
% & SN-TS-BYOL & 0.403 & 0.416 & 0.418 \\ 
% & TS2Vec & {\ul 0.476} & {\ul 0.467} & 0.444 \\
% & SN-TS2Vec, cos &  {\ul 0.476} &  0.306 & {\ul 0.663}\\
% {\multirow{-3}{*}{Europe}} & SoftTS2Vec (AR) & \underline{0.53} & 0.5 & 0.5 & \underline{0.5} & \textbf{0.5} \\ 
\midrule
& TS2Vec~\cite{Yue22} & 0.61 & 0.58 & 0.56 & \underline{0.54} & 0.52 \\
& SoftCL~\cite{lee2024soft} & \underline{0.67} & \underline{0.64} & \underline{0.60} & \textbf{0.57} & \underline{0.55}\\
{\multirow{-3}{*}{Aggregated}} & DepTS2Vec (Ours) & \textbf{0.68} & \textbf{0.65} & \textbf{0.61} & \textbf{0.57} & \textbf{0.56} \\
% & TS-BYOL & 0.316 & 0.398 & 0.26 \\ 
% & SN-TS-BYOL & 0.403 & 0.416 & 0.418 \\ 
% & TS2Vec & {\ul 0.476} & {\ul 0.467} & 0.444 \\
% & SN-TS2Vec, cos &  {\ul 0.476} &  0.306 & {\ul 0.663}\\
% {\multirow{-3}{*}{Aggregated}} & SoftTS2Vec (AR) & 0.60 & 0.57 & 0.55 & 0.52 & 0.50 \\ 
\bottomrule
\end{tabular}
\label{table:drought_pred}
\end{table}

% \begin{table}[htb]
%     \centering
%     \begin{tabular}{lcccc}
%      \hline
%      Metric & TS2Vec~\cite{Yue22} & SoftCL~\cite{lee2024soft} & TS2Vec(Ours, MA) & TS2Vec(Ours, AR)\\
%      \hline
%      RMSE & 15.5 & 10.33 & 9.9 & \\
%      \hline
%     \end{tabular}
%     \caption{RMSE ($\downarrow$)}
%     \label{tab:temp_forecast}
% \end{table}

\paragraph{Drought prediction} To evaluate the quality of our embeddings, we use Gradient Boosting as a classifier on top of them to solve the binary classification problem described in Section~\ref{exp:drought_prediction}. As an evaluation metric, we use ROC-AUC. The comparison results for different regions are described in Table~\ref{table:drought_pred}. According to the aggregated statistics across all regions, we can see that our method significantly outperforms the standard TS2Vec approach for all forecasting horizons. According to the comparison with the current state-of-the-art SoftCL approach, we observe a slight improvement for all forecasting horizons. Moreover, our similarity matrices are completely interpretable, while this property is not valid for the SoftCL method.

\paragraph{Temperature forecasting} 
For the temperature forecasting, we used a linear regression based on the representations. Following~\cite{tan2023openstl}, we use a Root Mean Squared Error (RMSE) as an evaluation metric and make a forecast a week in advance. The comparison results are similar to those of the drought prediction and are depicted in Fig.~\ref{fig:temp_forecast}. Indeed, we can see a massive advantage of our method over the standard TS2Vec method and a slight improvement over the SoftCL approach.

\begin{figure}[h]
    \centering
    \includegraphics[width=0.4\textwidth]{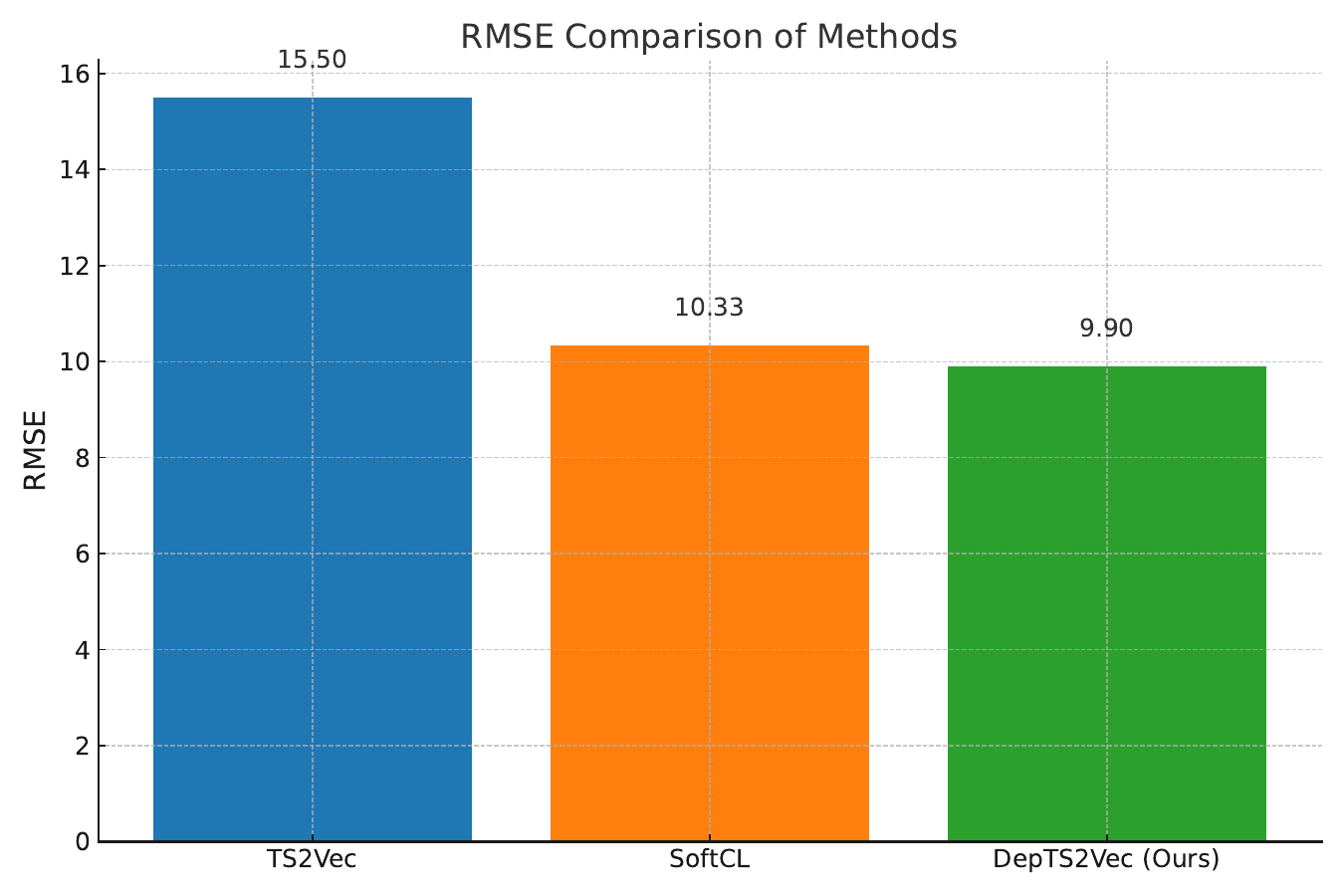}
    \caption{Comparison of the methods via RMSE}
    \label{fig:temp_forecast}
\end{figure}

%\section{Acknowledgments}

%The research was supported by the Russian Science Foundation grant 20-7110135.

\section{Discussion}
The main assumption underlying our theoretical framework is data continuity. Nevertheless, the core idea can be naturally extended to other data domains. For \textit{graph-structured data}, hard dependency corresponds to immediate node adjacency, while soft dependency can be defined through path lengths or spectral distances on the graph Laplacian. For \textit{sequential user data}, recent actions form hard dependencies, whereas older interactions decay in relevance, leading to a soft autoregressive structure. For \textit{non-smooth or seasonal data}, an additional seasonal component can be incorporated into the similarity matrix (analogous to SARIMA~\cite{box2015time} decomposition), capturing periodic dependencies. Our approach can also handle \textit{anomalies} by masking corrupted entries in the similarity matrix, thereby avoiding false positive pairs. Such a mask can be obtained either from external anomaly detection models (e.g., change-point detection) or directly inferred from domain-specific knowledge. Finally, in the presence of \textit{concept drift}, the similarity matrix can be adaptively updated (e.g., in a sliding window fashion), which enables the framework to capture evolving dependency structures in streaming data. In summary, the theoretical foundation we propose is general and provides a principled way to extend self-supervised contrastive learning to a wide class of dependent data types.

\section{Conclusions}

In this work, we introduced a theoretically grounded framework for contrastive self-supervised learning (SSL) specifically designed to handle continuous dependent data. Unlike conventional approaches that assume semantic independence between samples, our method leverages a single data continuity assumption to model both \emph{hard} and \emph{soft} dependencies —-- similar to Moving Average (MA) and Autoregressive (AR) components in ARIMA models—--using analytically derived similarity matrices. Combining different \emph{ground truth} and \emph{estimated} similarity matrices, we construct novel dependency-aware loss functions applicable to both temporal and spatio-temporal domains. 

To evaluate our theoretical framework in practice, we made an extensive experimental evaluation. Our method consistently outperformed state-of-the-art baselines such as TS2Vec and SoftCL. In \emph{time series classification tasks} on the UCR and UEA benchmarks, DepTS2Vec achieved accuracy gains of $2.08$\% and $4.17$\%, respectively, over TS2Vec. For \emph{spatio-temporal downstream problems} such as drought classification, our approach delivered a $7$\% improvement in ROC-AUC, and it also achieved the lowest RMSE in temperature forecasting, outperforming both TS2Vec and SoftCL. 

\section{Acknowledgements}
The work was supported by the grant for research centers in the field of AI provided by the Ministry of Economic Development of the Russian Federation in accordance with the agreement 000000C313925P4F0002 and the agreement with Skoltech №139-10-2025-033.

\bibliographystyle{ieeetr}
\bibliography{bib}

\appendices

\section{Proof of Theorem \ref{thm:nearby_log_reg} from Section~\ref{sec:method}}\label{app:proof_nearby_log_reg}
\begin{proof} 
To solve the constrained optimization problem \eqref{opt_prob}, we will use the method of Lagrange multipliers.  
The Lagrangian is the following for Lagrange multipliers $\boldsymbol{\alpha}, \boldsymbol{\beta}, \boldsymbol{\gamma}$:
\begin{align*}
&\mathcal{L} = \sum_{i \neq j}d(f_{\theta}(\vecX_i), f_{\theta}(\vecX_j)) g_{ij} + \\
&\tau \sum_{i \neq j}  g_{ij}\left(\ln(g_{ij}) - 1 \right) + \sum_{i=1}^{N} \alpha_{i} \left(\sum_{j=i+1}^{N}g_{ij} - 1 \right) + \\ &\sum_{j=1}^{N} \beta_{j} \left(\sum_{i=j+1}^{N}g_{ij} - 1 \right) %+ \sum_{i} \gamma_{i} \mathbf{G}_{i,i},
\end{align*}
%where $\mathbf{W} = \widehat{\mathbf{G}}$.

Calculating the partial derivatives, we get:
$$
\frac{\partial{\mathcal{L}}}{\partial{g_{ij}}} =
\begin{cases}
d(f_{\theta}(\vecX_i), f_{\theta}(\vecX_j)) + \tau \ln(g_{ij}) + \alpha_i, & j > i,\\
d(f_{\theta}(\vecX_i), f_{\theta}(\vecX_j)) + \tau \ln(g_{ij}) + \beta_j, & j < i,\\
%\gamma_i, & j = i,\\
\end{cases}
$$

\begin{align*}
\frac{\partial{\mathcal{L}}}{\partial{\alpha_i}} &= \sum_{j = i+1}^{N}g_{ij} - 1, \\
\frac{\partial{\mathcal{L}}}{\partial{\beta_j}} &= \sum_{i = j+1}^{N}g_{ij} - 1, 
%\frac{\partial{\mathcal{L}}}{\partial{\gamma_i}} = \mathbf{G}_{ii}.
\end{align*}

Setting these derivatives to zero leads to the following system of equations:
$$
j > i: 
\begin{cases}
d(f_{\theta}(\vecX_i), f_{\theta}(\vecX_j)) + \tau \ln(g_{ij}) + \alpha_i = 0\\
\sum_{j = i+1}^{N}g_{ij} - 1 = 0,\\
\end{cases}
$$

$$
j < i: 
\begin{cases}
d(f_{\theta}(\vecX_i), f_{\theta}(\vecX_j)) + \tau \ln(g_{ij}) + \beta_j = 0\\
 \sum_{i = j+1}^{N}g_{ij} - 1 = 0,\\
\end{cases}
$$

% $$
% j = i: \mathbf{G}_{ii} = 0.
% $$

Solving these equations leads to the following results:
$$
g_{ij} = 
\begin{cases}
\frac{e^{\frac{-1}{\tau}d(f_{\theta}(\vecX_i), f_{\theta}(\vecX_j))} }{\sum_{j = i + 1}^{N} e^{\frac{-1}{\tau}d(f_{\theta}(\vecX_{i}), f_{\theta}(\vecX_j))}}, & j > i\\
\frac{e^{\frac{-1}{\tau}d(f_{\theta}(\vecX_i), f_{\theta}(\vecX_j))} }{\sum_{i = j + 1}^{N} e^{\frac{-1}{\tau}d(f_{\theta}(\vecX_{j}), f_{\theta}(\vecX_i))}}, & j < i\\
%0. & j = i
\end{cases}
$$

Rewriting these formulae and accounting the zero diagonal, we obtain the following solution:
$$
g_{ij} = \frac{e^{-\frac{1}{\tau}d(f_{\theta}(\vecX_i), f_{\theta}(\vecX_j))} }{\sum_{k = \min(i,j) + 1}^{N} e^{-\frac{1}{\tau}d(f_{\theta}(\vecX_{\min(i, j)}), f_{\theta}(\vecX_k))}} \mathbbm{1}_{\{i \neq j \}}
$$

\end{proof}

\subsection{DepTS2Vec for temporal data}
\label{app:implementation_details_temp}

For both the general training protocol and the encoder architecture, we follow the default setup from TS2Vec~\cite{Yue22}.

\emph{Loss function.} We use the default instance-wise loss combined with the temporal loss, employing either the MA~\eqref{sim:ma_sim_mat} or AR~\eqref{sim:ar_sim_mat} ground-truth similarity matrices. 
For AR-based similarity, we chose the best hyperparameter $k$ from the range \{1, 5, 10\}. For each dataset in the UCR and UEA archives, we identified distinct optimal hyperparameters, including the choice of ground truth similarity matrix and the hyperparameter $k$. Table~\ref{tab:domain_ranks_ablation} presents the detailed information about the hyperparameters.

%\( k \in \{1, 5, 10\} \).

\subsection{DepTS2Vec for spatio-temporal data}
\label{app:implementation_details_ST}

\emph{General training protocol.} Models are trained for a maximum of 100 epochs. Early stopping halts training if the validation loss fails to decrease for 10 consecutive epochs. We adopt a learning rate of $0.001$, consistent with TS2Vec \cite{Yue22}, and use a history window of 60 time steps. The average training time was 15 hours on a single GeForce GTX $1080$ Ti GPU.

\emph{Encoder architecture.} Each convolutional cell has a hidden dimension and embedding size of 8. Regional embeddings are 512-dimensional. We use three convolutional layers with a kernel size of $3 \times 3$.

\emph{Data augmentations.} 
We use standard computer vision augmentations like Gaussian blur and average-pooling employing a kernel size of $5$.

% \emph{General setting} The training protocol is the following. The maximum number of epochs is $100$. The training stops if, during ten consecutive epochs, the loss function on the validation set doesn't decrease. Learning rate is $0.001$ following TS2Vec~\cite{Yue22}. The history length is $60$.

\emph{Encoder.} The hidden dimension and embedding size for each cell are set to $8$. The embedding size of the region is $512$. Kernel size is $(3, 3)$. The number of layers is $3$.

\emph{Loss function.} We use the default spatial loss from Section~\ref{method: objective function} in combination with a temporal loss, utilizing either the MA~\eqref{sim:ma_sim_mat} or AR~\eqref{sim:ar_sim_mat} ground-truth similarity matrices. Similarly to the temporal case for AR-based similarity we chose the optimal hyperparameter $k$ from the set {1, 5, 10}. According to our experiments MA ground truth similarity matrix~\eqref{sim:ma_sim_mat} showed best performance.

%For our experiments, we use the spatio-temporal objective function described in Section~\ref{method: objective function}. The spatial loss 
%is fixed while for temporal loss we use objective function~\eqref{eq:loss_sim_matrix} with the ground truth MA matrix~\eqref{sim:ma_sim_mat} and the estimated one from Theorem~\ref{thm:nearby_log_reg}.

% \emph{Augmentations} The optimum resize shape for all regions is $60 \times 60$. The kernel size for Gaussian blur and average pooling is $5$. 

\end{document}